\newcommand{\TDUCB}{TD-UCB}
\newcommand{\SWGREEDY}{SW-GREEDY}
\newcommand{\ext}{\textsc{\tiny EXT}}
\newcommand{\pst}{\textsc{\tiny PST}}
\newcommand{\ri}{\textsc{\tiny RI}}
\tikzset{%
  remember picture with id/.style={%
    remember picture,
    overlay,
    save picture id=#1,
  },
  save picture id/.code={%
    \edef\pgf@temp{#1}%
    \immediate\write\pgfutil@auxout{%
      \noexpand\savepointas{\pgf@temp}{\pgfpictureid}}%
  },
  if picture id/.code args={#1#2#3}{%
    \@ifundefined{save@pt@#1}{%
      \pgfkeysalso{#3}%
    }{
      \pgfkeysalso{#2}%
    }
  }
}
\def\savepointas#1#2{%
  \expandafter\gdef\csname save@pt@#1\endcsname{#2}%
}
\def\tmk@labeldef#1,#2\@nil{%
  \def\tmk@label{#1}%
  \def\tmk@def{#2}%
}
\newcommand\tikzmark[2][]{%
\tikz[remember picture with id=#2] #1;}
\newcommand\MyBoxZ[4][-1ex]{%
  \tikz[remember picture,overlay,pin distance=0cm]
  {\draw[draw=#4,line width=1pt,fill=#4!20,rectangle,rounded corners]
( $ (pic cs:#2) + (-1ex,2.3ex) $ ) rectangle ( $ (pic cs:#3) + (23ex,#1) + (1.7ex,-1.3ex)$ );
}
}
\newcommand\MyBoxY[4][-1ex]{%
  \tikz[remember picture,overlay,pin distance=0cm]
  {\draw[draw=#4,line width=1pt,fill=#4!20,rectangle,rounded corners]
( $ (pic cs:#2) + (-1ex,2.3ex) $ ) rectangle ( $ (pic cs:#3) + (35ex,#1) $ );
}
}
\newcommand\MyBoxX[4][-1ex]{%
  \tikz[remember picture,overlay,pin distance=0cm]
  {\draw[draw=#4,line width=1pt,fill=#4!20,rectangle,rounded corners]
( $ (pic cs:#2) + (-0.3ex,2.3ex) $ ) rectangle ( $ (pic cs:#3) + (6.1ex,#1) $ );
}
}
\newtheorem{remark}{Remark}
\newtheorem{lemma}{Lemma}
\newtheorem{theorem}{Theorem}
\newtheorem{definition}{Definition}
\newcommand{\barbelow}[1]{\mkern3mu\underline{\mkern-1mu #1\mkern-1mu}\mkern1mu}
\newenvironment{prf}{\noindent{\bf Proof:}\hspace*{1em}}
\newcommand{\removelatexerror}{\let\@latex@error\@gobble}
\begin{document}
\title{A Truthful Mechanism with Biparameter Learning for Online Crowdsourcing}

\author[1]{Satyanath Bhat}
\author[1]{Divya Padmanabhan}
\author[1]{Shweta Jain}
\author[1]{Y Narahari}

\affil[1]{Computer Science and Automation \\
Indian Institute of Science \\
\url{http://lcm.csa.iisc.ernet.in/}}

\maketitle

\begin{abstract}
We study a problem of allocating divisible jobs, arriving online, to workers in a crowdsourcing setting which involves learning two parameters of strategically behaving workers. Each job is split into a certain number of tasks that are then allocated to workers. Each arriving job has to be completed within a deadline and each task has to be completed satisfying an upper bound on probability of failure. The job population is homogeneous while the workers are heterogeneous in terms of costs, completion times, and times to failure. The job completion time and time to failure of each worker are stochastic with fixed but unknown means. The requester is faced with the challenge of learning two separate parameters of each (strategically behaving) worker simultaneously, namely, the mean job completion time and the mean time to failure. The time to failure of a worker depends on the duration of the task handled by the worker. Assuming non-strategic workers to start with, we solve this biparameter learning problem by applying the Robust UCB algorithm. Then, we non-trivially extend this algorithm to the setting where the workers are strategic about their costs. Our proposed mechanism is dominant strategy incentive compatible and ex-post individually rational with asymptotically optimal regret performance.
\end{abstract}



%
%
%
%
%

\section{Introduction}
Crowdsourcing is widely used in procuring labels and services for traditional AI applications. Often many of the tasks crowdsourced are more readily accomplished by humans than computers. An additional advantage is the scalable and cost-effective nature of crowdsourcing.
However, typical crowdsourcing platforms may not consider several important aspects of traditional planning such as ensuring work completion within a strict deadline and with assured guarantees on the quality. 

As a motivating example for this paper, consider a sequence of jobs arriving online where each job corresponds to translating  a large document which has to be completed within a deadline and with an assured level of accuracy. It may not be possible for a single individual worker to accomplish this job, so the requester could split such a job into tasks (either at chapter or section or any other level) and allocate each task to a crowd worker. Due to the very nature of the task, a worker, if employed for a long duration, might start committing errors. We refer to the duration until which an agent works without committing any error as the time to failure (TTF). Also, each worker differs in the time taken to complete the entire job (if the entire job is executed by the worker). The time taken by a worker to complete the job all by himself is called the job completion time (JCT) of that worker. Each worker incurs a certain cost to complete the entire job. Note that the workers are heterogeneous in terms of their costs, JCT, and TTF. Moreover, JCT and TTF of the workers are stochastic. An additional non-trivial challenge occurs when crowd workers are strategic and may misrepresent their costs in the hope of gaining higher utility.  This setting occurs in other problems such as tagging of a large repository of images, audio transcriptions, etc.


In this work, we consider jobs which (a) arrive online, (b) are divisible (into tasks),  (c) have strict completion deadlines, and (d) are to be completed with an assured accuracy.  We propose a multi-armed bandit (MAB) mechanism which learns the two parameters (mean job completion time (MJCT) and mean time to failure (MTTF)) of the workers while eliciting their privately held costs truthfully. We show that the proposed MAB mechanism minimizes the regret while meeting the deadline and accuracy requirements on every job. The following are the specific contributions of this work. 

\begin{enumerate}[leftmargin=*]
\item \emph{\textbf{Non-strategic, with learning:}}
We look at the problem of allocating divisible online jobs to crowd workers so as to meet the constraints on deadline and accuracy (\Cref{sec:non-strategic}). The underlying optimization problem turns out to be non-trivial since the parameters MJCT and MTTF of the workers are unknown. We overcome this challenge by devising a biparameter learning scheme based on the Robust UCB~ algorithm \cite{DBLP:BUBECK12HEAVYTAIL}. Further, we embed this learning scheme into our social welfare maximizing algorithm, which we refer to as \SWGREEDY.

\item \emph{\textbf{Strategic, with learning:}} We next non-trivially extend the results above to the setting where worker costs are privately held (\Cref{sec:strategic}) by designing a mechanism (\TDUCB). This mechanism is is dominant strategy incentive compatible  and ex-post individually rational (\Cref{thm:ic-ir}).
\item \emph{\textbf{Regret Analysis:}} In \Cref{sec:regret-analysis}, we show, for non-strategic as well as strategic settings, that the number of jobs for which a non-optimal worker set is chosen, is upper bounded by $O(\log T)$ (\Cref{thm:optpull}), where $T$ is the total number of jobs to be completed.  Moreover, once an optimal worker set is selected, the allocation algorithm converges asymptotically to an efficient allocation, ensuring that the average regret goes to zero in the limit (\Cref{thm:asym}).  
\item \emph{\textbf{Simulations:}} Finally, we show the practical efficacy of our learning mechanism via simulations in \Cref{sec:simulations}. 
\end{enumerate}


\section{Previous Work}
We now look at previous work related to our setting. We group the relevant literature based on whether or not crowd workers are strategic.

In the non-strategic case, most of the work in crowdsourcing has focused on  models for aggregating labels and building classifiers \cite{RAYKAR10,KARGER11}. Many efforts also address problems similar to the one considered in our paper. \citet{FARADANIHI11} look at the design of pricing schemes dependent on the completion times of the workers.
The strategic nature of the workers is not considered here. The problem of completing tasks within a deadline is also investigated by \citet{YU15}. The authors consider the setting where the workers delegate tasks to other workers when they are unable to complete the work within a deadline. 
Here the costs to workers are assumed to be known and workers are non-strategic.
Under a different setup, \citet{ding2013multi} look at the budgeted multi-armed bandit problem where the two parameters stochastic costs and stochastic rewards are learnt. However, they do not consider strategic workers. 

In the strategic case, \citet{PRAPHUL15} look at allocating indivisible tasks to strategic crowd workers under deadline constraints with the assumption that the reliability (in terms of completion of the task) of the agents is common knowledge and not estimated. \citet{SINGER13} and \citet{Biswas2015} look at pricing mechanisms in the presence of budget constraints and task completion deadlines. However, the heterogeneity with respect to time to failure is not modelled.  \citet{Tran-Thanh2013} look at crowdsourcing classification tasks with the goal of trading off cost and accuracy of the estimation. However the TTF and JCT of the workers is not modeled here. Choosing an optimal worker set in order to obtain an assured accuracy level has been studied in \citet{JainGBZN14}. The allocation algorithm makes use of the multi-armed bandits abstraction \citet{AUER02}. A version of their allocation algorithm was designed for the case where workers are strategic with respect to bidding their costs. However, their setting does not look at the completion of tasks within a deadline.
The problem of allocating tasks concurrently to several workers in order to meet deadlines is looked at by  \citet{GERDING2010}. The work uses a variant of VCG mechanism  to elicit the costs truthfully from the workers. They consider stochastic completion times of tasks but do not consider the time to failure during the allocation.

Our work differs from all the work listed above in that, we design an allocation scheme to complete jobs within a deadline while simultaneously learning the mean completion time as well as the mean time to failure of the  workers. We also design a mechanism to elicit the costs of the workers truthfully.


\section{The Model}
Let $N=\{1,\ldots,n\}$ denote the set of crowd workers (also referred to as agents) available to the requester. A sequence of $T$ homogeneous jobs arrives at the platform, one at a time. Following are some of the design issues pertaining to the requester. 
\begin{enumerate}[leftmargin=3mm]
\item \textbf{Job Parameters} 
\begin{enumerate}[leftmargin=1mm]
\item \emph{Deadline:} The clock starts ticking for a job as soon as it arrives. We use $D$ to denote the deadline. The deadline $D$ on each job is an upper bound on the duration, starting from the arrival of that job,  before which the job is required to be completed in expectation. 
\item \emph{Task creation:} The requester can divide a current job $t$ $(t=1, \ldots, T)$ into a certain number of tasks so as to facilitate completion of the job by the deadline $D$. We use $x_i^{(t)}$ to denote the fraction of the job $t$ assigned as a task to the worker $i$. Therefore,  $0 \leq x_i^{(t)} \leq 1$ and $\sum_{i=1}^n x_i^{(t)} = 1$. We assume arbitrary division of a given job into tasks for ease of exposition. However, this assumption can be relaxed to capture meaningful constraints such as the size of the task. 
\item \emph{Threshold on probability of failure for tasks:} A worker is more likely to commit an error if he works for a longer duration on a task. We say a worker has failed when he commits an error. We use $\varepsilon$ to denote (the common) threshold on probability of failure for any task. This threshold allows the requester to control the overall ``quality" of the job.
\end{enumerate}
\item \textbf{Worker Parameters}  
\begin{enumerate}[leftmargin=1mm]
\item \emph{Job Completion Time (JCT):}  A worker has a stochastic job completion time, which is the time he requires to complete the entire job by himself. JCT for a worker is random variable with a fixed but unknown mean. We refer to the mean job completion time as MJCT. The requester wishes to learn the MJCT for each worker. If $\rho_i$ is the MJCT of worker $i$, then the task allocation $x_i^{(t)}$ will meet the deadline constraint in expectation if $x_i^{(t)} \times \rho_i \leq D$. 
\item \emph{Time to Failure (TTF):} A worker is also characterized by a stochastic time to failure, which denotes the duration for which a worker would work without a failure.  Like JCT, TTF also has a fixed yet unknown mean, which the requester wishes to learn.  If $F_i$ is the CDF of TTF for agent $i$, who workers for a expected duration $x_i^{(t)} \times \rho_i$ on the task allocation given by the fraction $x_i^{(t)}$ of job $t$, the requirement on threshold probability error dictates $F_i(x_i^{(t)} \times \rho_i) \leq \varepsilon$.
\item \emph{Cost Incurred:} Worker $i$ has a privately held cost $c_i \in [\barbelow{c},\bar{c}]$ which represents the cost incurred by worker $i$  to complete the job entirely on his own. Therefore, the cost involved to complete $x_i^{(t)}$ fraction of the job by the worker $i$ is $c_i x_i^{(t)}$. 
\end{enumerate}
\item \textbf{Goal of Optimization Problem:} The constraints on deadline and threshold on probability of failure for every task has to be met in a cost optimal way for every online job $t$. Thus, the underlying optimization problem for the entire collection of  jobs $\{1,2,\ldots, T\}$ is given by \cref{opt_problem}.
\end{enumerate}
  
\begin{equation}
\label{opt_problem}
\begin{array}{|c|}
\hline\\
\displaystyle \min_{x_i^{(t)} \in [0,1]} \quad \sum_{t=1}^T \displaystyle \sum_{i=1}^n c_ix_i^{(t)},\\ 
\text{subject to}, \\  
\sum_{i=1}^n x_i^{(t)} = 1, \forall t\\
\text{Completion time}(x_i^{(t)}) \leq D \;\forall i \in N, \forall t,\\
\text{Probability of failure}(x_i^{(t)}) \leq \varepsilon \; \forall i \in N, \forall t.
\;\\
\hline
\end{array}
\end{equation}
As mentioned earlier, the JCT and the TTF of the workers are stochastic in nature. We assume the JCT of each worker follows a log-normal distribution with unknown yet fixed mean $\rho_i \in [\barbelow{\rho}, \bar{\rho}]$ while the TTF for each worker follows an exponential distribution with mean $\beta_i \in [\barbelow{\beta},\bar{\beta}]$.
\begin{remark}[Choice of Distributions]
The choice of log-normal distribution is due to its wide applicability in social sciences and economics to model similar quantities. However any suitable non-negative random variable whose distribution is sub-Gaussian (or sub-exponential) may be used. 
As discussed, the errors in this setting are introduced due to higher working duration on the task. This is analogous to the modelling of failure as function of time, in biological or computer or reliability literature, as exponential distributions. Hence, we model the TTF of the workers as exponential. 
\end{remark}
The optimization problem stated in \cref{opt_problem} involves a learning scheme along with cost minimization across all the $T$ online jobs. However, due to independence across the jobs, the problem can be decomposed into a sequential cost minimization problem corresponding to each job ($t$). Therefore, in \cref{opt_problem} the summation over the jobs can be omitted. This enables us to use $x_i$ in place of $x_i^{(t)}$ for the sequential optimization problem for each job.

\section{The Case of Non-Strategic Workers}
\label{sec:non-strategic}
We first study the scenario where the costs $c_i$ incurred by the workers are common knowledge. If the means ($\rho_i$ and $\beta_i$) are known to the requester, no feasible allocation $x_i$ to the worker $i$ should exceed $D/\rho_i$. The additional requirement on accuracy requires that the probability of a worker failing in the duration $\rho_ix_i$ does not exceed $\varepsilon$. This is equivalent to the constraint $F_i(\rho_ix_i) \leq \varepsilon$ where $F_i$ is the CDF of the random variable TTF of worker $i$ which we model as the exponential distribution with mean $\beta_i$. On simplification, the requester's optimization problem reduces to \cref{opt_problem_known}.

\begin{equation}
\label{opt_problem_known}
\begin{array}{|c|}
\hline\\
\displaystyle \min_{x_i \in [0,1]}  \displaystyle \sum_{i=1}^n c_ix_i,\\ 
\text{subject to}, \\  
\sum_{i=1}^n x_i = 1,\\
x_i \leq \frac{1}{\rho_i}\min\left(D,\beta_i \ln\left(\frac{1}{1-\varepsilon}\right)\right) \;\forall i \in N, \\
\;\\
\hline
\end{array}
\end{equation}
In practice, $\rho_i$ and $\beta_i$ are not known and need to be learnt.  We make use of the multi-armed bandit (MAB) abstraction for learning these parameters. More specifically, since $\rho_i$ and $\beta_i$ are sub-exponential distributions, we appeal to the Robust UCB technique~\cite{DBLP:BUBECK12HEAVYTAIL}. While $\psi$-UCB algorithm~\cite{DBLP:BUBECK2012REGRET} is a regret minimizing scheme for  learning the mean of sub-Gaussian distributions, for heavy tailed distributions (e.g. log normal and exponential), Robust UCB has been shown to be regret minimizing~\cite{DBLP:BUBECK12HEAVYTAIL}. We adopt the Robust UCB scheme with truncated empirical mean as the estimator.

\subsection{Difficulty in Learning $\beta_i$}
If a worker $i$, allocated a fraction $x_i$ of the job, takes time $\tau$ for completion, then $\tau/x_i$ is a sample from the distribution log-normal($\rho_i$). Therefore, every allocation contributes one such sample for the Robust UCB algorithm estimating $\rho_i$.  However, for estimating $\beta_i$, each sample allocation fed to the Robust UCB algorithm must correspond to a failure, but this is not practical as we do not observe failure at every instance of allocation. To handle this difficulty, we propose to use a surrogate random variable. Consider the experiment where a worker $i$ is allocated a task (fraction of a job) on which the worker spends a duration of at least $\delta$. The experiment is deemed to have failed if the worker $i$ fails in the first $\delta$ duration of allocation, otherwise it is deemed a success. Let $N_\delta^{(i)}$ be the number of such independent experiments till a failure is encountered. We propose to use the random variable  $\beta_{\delta,i}^{'} = \delta \times N_\delta^{(i)}$ to construct a sample from exponential($\beta_i$). To obtain such a sample, for every job $t$, we observe for a duration $\delta$ to see if any of the allocated workers have failed. Let $\eta_\delta^{(i)}$ be the number of contiguous instances (of jobs) of allocation during which a worker $i$ does not fail in the interval $\delta$. Note that $\eta_\delta^{(i)}$ is a sample from $N_{\delta}^{(i)}$. Therefore, the value $\delta \times \eta_{\delta}^{(i)}$ forms a sample of interest. Once a sample is obtained, $\eta_{\delta}^{(i)}$ is reset and the  process is again repeated to collect more samples. The expectation of the surrogate random variable in the limit coincides with $\beta_i$ due to \Cref{lemma_main}.
\begin{lemma}
\label{lemma_main}
$\lim_{\delta \rightarrow 0 }\mathbb{E}[\beta_{\delta,i}^{'}] = \beta_i$
\end{lemma}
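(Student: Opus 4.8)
The plan is to obtain a closed form for $\mathbb{E}[\beta_{\delta,i}^{'}]$ and then pass to the limit. First I would unpack the surrogate construction: for a given job, worker $i$ spends a duration of at least $\delta$ on its allocated task, and the corresponding experiment is declared a \emph{failure} precisely when the worker's time to failure, measured from the start of that allocation, is at most $\delta$. Since the TTF of worker $i$ is exponential with mean $\beta_i$ and each job initiates a fresh allocation, the failure clock restarts between jobs; by the memorylessness of the exponential distribution, the outcomes of successive duration-$\delta$ experiments are therefore i.i.d.\ Bernoulli trials, each having ``failure'' probability
\[
p_\delta \;=\; F_i(\delta) \;=\; 1 - e^{-\delta/\beta_i}.
\]

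Given this, $N_\delta^{(i)}$ — the number of experiments up to and including the first failure — is a geometric random variable with parameter $p_\delta$, so $\mathbb{E}\!\left[N_\delta^{(i)}\right] = 1/p_\delta$. Hence
\[
\mathbb{E}[\beta_{\delta,i}^{'}] \;=\; \delta \,\mathbb{E}\!\left[N_\delta^{(i)}\right] \;=\; \frac{\delta}{1 - e^{-\delta/\beta_i}}.
\]
It then remains to evaluate $\lim_{\delta \to 0^+} \delta / \bigl(1 - e^{-\delta/\beta_i}\bigr)$. Writing $u = \delta/\beta_i$ and using $1 - e^{-u} = u - u^2/2 + O(u^3)$ (equivalently, one application of L'Hôpital's rule), one gets $\delta/\bigl(1 - e^{-\delta/\beta_i}\bigr) = \beta_i \cdot u/(1-e^{-u}) \to \beta_i$ as $\delta \to 0$, which is exactly the claimed identity.

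The only genuinely delicate step is the i.i.d.-Bernoulli claim: one must argue that consecutive allocations to the same worker are independent restarts of the exponential TTF clock — so that ``failure within the first $\delta$'' is a fresh Bernoulli event each time — and that the observation window of length $\delta$ is well defined, i.e.\ that the allocated expected duration $\rho_i x_i$ is at least $\delta$, which holds for all sufficiently small $\delta$. Everything downstream is the elementary mean of a geometric distribution together with a one-line limit. I would also note in passing that since $1 - e^{-x} < x$ for $x>0$, the quantity $\delta/(1-e^{-\delta/\beta_i})$ exceeds $\beta_i$ for every $\delta>0$, so the surrogate is slightly biased upward at finite $\delta$ and the bias vanishes monotonically as $\delta \to 0$.
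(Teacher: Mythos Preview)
Your argument is correct and follows essentially the same route as the paper: identify $N_\delta^{(i)}$ as geometric with success probability $1-e^{-\delta/\beta_i}$, write $\mathbb{E}[\beta_{\delta,i}^{'}]=\delta/(1-e^{-\delta/\beta_i})$, and evaluate the limit via L'H\^opital. Your version is in fact more careful, since you explicitly justify the i.i.d.\ Bernoulli structure through memorylessness and note the upward finite-$\delta$ bias, both of which the paper leaves implicit.
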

\begin{proof}
By definition, $
\beta_{\delta,i}^{'} = N^{(i)}_\delta \times  \delta $.
Note, $N_{\delta} \sim \text{Geometric}(1- \exp(-\delta/\beta_i))$ and therefore, $\mathbb{E}[N_\delta] = \frac{1}{1- \exp(-\delta/\beta_i)}$.
\begin{align}
\label{L-hospital-simplify}
\lim_{\delta \rightarrow 0} \mathbb{E}[\beta_{\delta,i}^{'}] &= \lim_{\delta \rightarrow 0}\frac{ \delta}{
1- \exp(-\delta/\beta_i)} = \beta_i
\end{align}
where \cref{L-hospital-simplify} follows by applying the L'Hospital's rule.
\end{proof}

\subsection{\SWGREEDY: A Greedy Allocation}
The workers are indexed in an increasing order of their costs and each worker $i$ is allocated the largest possible fraction $x^{\pst}_i$ which does not violate the constraints in \cref{opt_problem_known} till all tasks of the job are allocated.

The constraint $\frac{1}{\rho_i}\min\left(D,\beta_i \ln\left(\frac{1}{1-\varepsilon}\right)\right)$, involves means which are unknown. As mentioned earlier, we use Robust UCB to learn estimates for $\rho_i$ and $\beta_i$. $\hat{\rho}_i^+$ and $ \hat{\beta}_i^+$ are the upper confidence indices while $\hat{\rho}_i^-$ and $\hat{\beta}_i^- $ are the lower confidence indices of MJCT and MTTF respectively, obtained from Robust UCB. $\hat{\rho}_i$ and $\hat{\beta}_i$ are the empirical estimates of MJCT and MTTF respectively for worker $i$.  We could substitute $\hat{\rho}_i^{-}, \hat{\rho}_i $ or $ \hat{\rho}_i^{+}$ as the estimate for $\rho_i$ in our constraint. A higher value of $\rho_i$ enforces a lower allocation to worker $i$ compared to when a lower value of $\rho_i$ is used. Hence we refer to $\hat{\rho}_i^{+}$ as a pessimistic estimate for 
$\rho_i$. By a similar reasoning, we refer to $\hat{\beta}_i^- $ as the pessimistic estimate for $\beta_i$. The use of the pessimistic estimates ensures that even with the true underlying means the constraint in \cref{opt_problem_known} is satisfied.

The allocation algorithm discussed above ensures that the social welfare regret of the learning scheme is optimized, hence we refer to the above allocation as \SWGREEDY\ (Algorithm~\ref{al:sw-greedy}). The social welfare is defined as follows.

\begin{definition}{Social Welfare:} Social welfare of a feasible (i.e. satisfying \cref{opt_problem_known}) allocation $x$ is the sum of valuations of the agents under that allocation. In this setting, the valuation of a crowd agent is $-c_ix_i$. Therefore, social welfare is given by $\sum_{i=1}^n -c_i x_i$.
\end{definition}

Every worker $i$ is paid an amount equal to the cost incurred, i.e. $ x_i^{\pst}(t) \times c_i$, where $x_i^{\pst}$ is the allocation to agent $i$ given by Algorithm \ref{al:sw-greedy}.

\begin{remark}[Pessimistic Selection] The fundamental underlying philosophy of the UCB family of algorithms is  ``optimism under uncertainty." Intuitively, this optimism helps in adequate exploration realtive to a naive scheme which just uses the empirical estimate. In our work, we do not use this philosophy implicitly, however, due to the greedy nature of the allocation scheme, the pessimistic allocation set is a superset of the optimistic allocation. 
\end{remark}
\newpage
\MyBoxX{starta}{enda}{gray}
\MyBoxY{startb}{endb}{gray}
\MyBoxZ{startc}{endc}{gray}
\begin{algorithm}[h!]
\label{al:sw-greedy}
\DontPrintSemicolon
\caption{\SWGREEDY\ Allocation Algorithm}
\KwIn {Set of workers $N$, number of jobs $T$, deadline $D$, accuracy level $\varepsilon$, input cost vector: $c_1 \leq c_2 \leq \ldots \leq c_n$ (By re-indexing $N$)\vspace{0.04in}}
\tikzmark{starta} $\forall i \in N$, $\hat{\rho}_i = \bar{\rho}$, $\hat{\rho}_i^+ = \bar{\rho}$, $\hat{\rho}_i^- = \barbelow{\rho}$, $N_{i,t = 0}$ \;
\hspace{0.5in}$\hat{\beta}_i = \barbelow{\beta}$, $\hat{\beta}_i^+ = \bar{\beta}$, $\hat{\beta}_i^- = \barbelow{\beta}$, $N_{i,t}^\beta = 0$, \;
$\eta_\delta^{(i)}=0$ \hspace{2in} \label{initialize} \vspace{2ex} \tikzmark{enda}\;

\For{ Online job arrival $t=1,\ldots,T $}
{
\vspace{1ex}
\tikzmark{startb}$x^{\pst}(t)=\{x_1^{\pst}(t),\ldots, x_n^{\pst}(t)\} = \{0,\ldots,0\}$\;
$i=1$\;
\While{$\sum_{j=1}^n x_j^{\pst}(t) < 1$}
{
$x_i^{\pst}(t)$ = $\frac{1}{\hat{\rho}_i^+} \min \left( D, \beta_i^- \ln \left[\frac{1}{1-\varepsilon} \right]\right)$\;

\If{$1-\sum_{j=1}^{i-1} x_j^{\pst}(t)< x_i^{\pst}(t)$}
{
$ x_i^{\pst}(t) = 1-\sum_{j=1}^{i-1} x_j^{\pst}(t)$ \;
}
$i=i+1$ \tikzmark{endb} \;
}
Define $\bar{k}_t = \max\{i:x_i^{\pst} > 0\}$ \;
Allocate the job $t$ as per $x_i^{\pst}$ \;
Observe $\tilde{\tau}_i$, the time of completion of $x_i^{\pst}$ by $i$\;
$\forall i \in \{1,\ldots,\bar{k}_t \}, N_{i,t} = N_{i,t-1} +1 $\;
$\hat{\rho}_i = \left( N_{i,t-1} \times \hat{\rho}_i + \frac{\tilde{\tau}_i}{x_i^{\pst}(t)} \right) \times \frac{1}{N_{i,t}} $\;
\tikzmark{startc}\For{$i \in \{1,\ldots,\bar{k}_t\}$}
{
\If{Worker $i$ made an error during $\delta$}
{
$\hat{\beta}_i = \frac{\hat{\beta}_i \times N_{i,t-1}^\beta +  (\delta \times \eta_\delta^{(i)})}{N_{i,t-1}^{\beta} + 1}$\;
$N_{i,t}^{\beta}= N_{i,t-1}^{\beta} + 1$\;
$\eta_\delta^{(i)} = 0$\;
}
\Else
{
$\eta_\delta^{(i)} = \eta_\delta^{(i)} + 1$\;
$N_{i,t}^{\beta} = N_{i,t-1}^{\beta} $\tikzmark{endc}\;
}
}
}
\end{algorithm}

\definecolor{ashgrey}{rgb}{0.7, 0.75, 0.71}
\begin{tikzpicture}[remember picture,overlay]
\coordinate (aa) at ($(pic cs:starta)+(5.7,-.8)$); 
\node[rectangle,draw,white, fill=ashgrey, text width=1.5cm,align=left,right] at (aa) {Initialize};
\end{tikzpicture}

\begin{tikzpicture}[remember picture,overlay]
\coordinate (aa) at ($(pic cs:startb)+(5.7,-.55)$); 
\node[rectangle,draw,white, fill=ashgrey, text width=1.5cm,align=left,right] at (aa) {Pessimistic \\ Selection};
\end{tikzpicture}

\begin{tikzpicture}[remember picture,overlay]
\coordinate (aa) at ($(pic cs:startc)+(5.1,-1.68)$); 
\node[rectangle,draw,white, fill=ashgrey, text width=1.3cm,align=left,right] at (aa) {Updates for \\ Surrogate of $\beta_i$ };
\end{tikzpicture}

\section{The Case of Strategic Workers: \TDUCB}
\label{sec:strategic}
Here, before an allocation is performed, the agents announce their bids. These bids may or may not be equal to their true private costs. We denote the bid profile by $(b_i, b_{-i})$, where $b_i$ is the bid of agent $i$ and $b_{-i}$ denotes the collection of bids of all agents except agent $i$.
In order to ensure that the agents bid their costs truthfully, we introduce a mechanism \TDUCB. 
The allocation rule remains the same as the one for the case where the workers are non-strategic. We use the allocation given in Algorithm~\ref{al:sw-greedy} replacing the input costs with the bids.

\subsection{Payment Scheme}
 Let $\xi_t$ denote a tuple of allocation and performance of the allocated workers for the job $t$. The learning until job $t$ is captured in the history $h_t = \{\xi_k\}_{k=0}^{t}$. In order to specify the payment scheme, we require the notion of `externality' imposed by an agent on another. We denote the externality imposed by agent $i$ on $j$ as $x_{i,j}^{\ext}(b_i,b_{-i};h_t,t)$, which signifies the additional fraction of the job allocated to the agent $j$ in the absence of agent $i$.  The externality for the job $t$ depends on the bid profile $(b_i, b_{-i})$ as well as the history of allocations till job $t$. Let $\overline{k_t}$ be the agent with the largest reported bid in the worker set chosen by the allocation scheme. \Cref{fig:schematic-pos} provides a schematic diagram indicating the position of the bids and the agents chosen by our algorithm.
 
\begin{figure}[h!]
\centering
\includegraphics[scale=0.64]{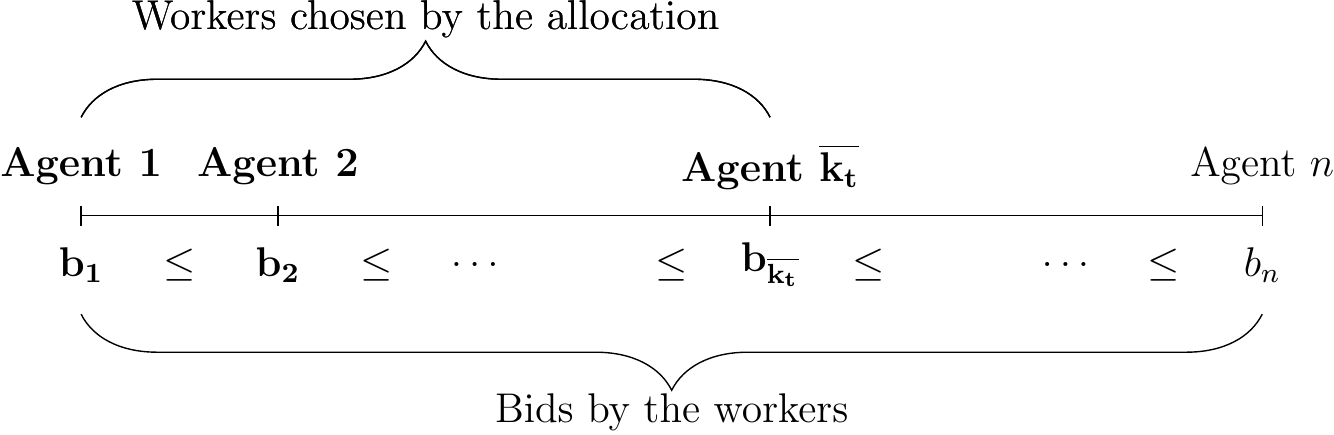}
\caption{Schematic of the allocation}
\label{fig:schematic-pos}
\end{figure} 
Formally, the externality is defined as \begin{align}
x_{i,j}^{\ext}(b_i, b_{-i};t) =
  \begin{cases}
   0     & \text{if } j< \overline{k_t} \text{ or } i> \overline{k_t},\\   
    Z_1 &\text{if } j=\overline{k_t},\\
    Z_2 &\text{if } j>\overline{k_t}, \quad\text{where},\\
  \end{cases}
\label{eq:ext} 
\end{align} 
 \[  Z_1 = \min\left(\frac{1}{\hat{\rho}_j^+} \min\left(D, \hat{\beta}_j^{-}\log\left(\frac{1}{1-\epsilon}\right)\right)- x_j^{\pst}(t), x_i^{\pst}(t)\right) , \] 
\[  Z_2 = \min\left(\frac{1}{\hat{\rho}_j^+} \min\left(D, \hat{\beta}_j^{-}\log\left(\frac{1}{1-\epsilon}\right)\right), x_i^{\pst}(t) - \displaystyle\sum_{s =\overline{k}_t}^{j-1} x_{i,s}^{\ext}(t)\right) \]


We now propose a payment structure in \cref{eq:pymt} that ensures truthful bidding and positive utility to the participating agents (\Cref{thm:ic-ir}).

\begin{align}
p_i(b_i, b_{-i};t) =
  \begin{cases}
   0     & \text{if } i> \overline{k_t},\\  
    Z_3 &\text{otherwise, \quad \text{where},}
  \end{cases}
\label{eq:pymt}
\end{align}
\[  Z_3 = \displaystyle\sum_{s=\overline{k}_t}^n 
\left[x_{i,s}^{\ext}(t)\times b_s\right] +  \left(x_i^{\pst}(t) - \displaystyle\sum_{s=\overline{k}_t}^n x_{i,s}^{\ext}(t) \right) \times \bar{c}   \]

\begin{remark}[Notation]
All the mechanism side parameters such as $x_i^{\pst}$ or $p_i$ are a function of $(b_i,b_{-i};h_t,t)$. Similarly, the agent side parameters such as utility $u_i$ depend on the tuple $(b_i, b_{-i};c_i,h_t,t)$. Note that the agent side parameters have an additional dependency on the true cost $c_i$. Whenever clear from the context, we drop one or more of these dependencies for ease of  notation. 
\end{remark}

\begin{remark}[Externality] Our mechanism is an externality based scheme like the VCG mechanism. We now set about the task of proving that the mechanism is truthul, regret minimizing, and individually rational,  while learning the associated stochastic parameters. Earlier works have shown the non-triviality involved in the design of such learning mechanisms~\cite{BABAIOFF09,DEVANUR09}.
\end{remark}

\subsection{Properties of \TDUCB\ Mechanism}
\begin{definition}{Utility of an Agent:} The utility of an agent in this setting is the difference between the valuation of an allocation and the payment made. The utility is given by the following.
\[ u_i(b_i,b_{-i};c_i,h_t,t) = -c_i \times x_i^{\pst}(b_i,b_{-i};c_i,h_t,t) + p_i(b_i,b_{-i};c_i,h_t,t)\]
\end{definition}
\begin{definition}{Dominant Strategy Incentive Compatible (DSIC): }A mechanism is DSIC if the utility $u_i(c_i, b_{-i}; c_i) \geq u_i(b_i, b_{-i}; c_i)\; \forall b_i \in [\underline{c}, \overline{c}],
\; \forall b_{-i} \in [\underline{c}, \overline{c}]^{n-1}, \; \forall i \in N$, where $b_i$  and $c_i$ are the bid and true cost incurred by the worker $i$ respectively, $b_{-i}$ is the bid profile of all agents other than $i$. 
\end{definition}
A DSIC mechanism ensures that an agent obtains the highest utility by bidding his true cost, irrespective of the bids of other agents.

\begin{definition}{Ex-post Individually Rational (IR): }
A mechanism is ex-post individually rational if $u_i(c_i, b_{-i}; c_i) \geq 0)$, $ \forall b_{-i} \in [\underline{c}, \overline{c}]^{n-1}\; \forall i \in N$. 
\end{definition}
An IR mechanism ensures that for every agent, the utility obtained from truthful bidding of the costs is non-negative.

\begin{theorem}
The \TDUCB\ mechanism is DSIC and IR.
\label{thm:ic-ir}
\end{theorem}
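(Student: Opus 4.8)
The plan is to recognize the single-shot mechanism that \TDUCB\ induces at each round $t$ as an instance of the Vickrey--Clarke--Groves (VCG) mechanism for a fractional covering problem, and then invoke the standard VCG argument for DSIC together with a Clarke-pivot computation for IR. The first observation is that, conditioned on the history $h_{t-1}$, the per-agent caps $\frac{1}{\hat{\rho}_i^+}\min(D,\hat{\beta}_i^-\ln(\frac{1}{1-\varepsilon}))$ used in Algorithm~\ref{al:sw-greedy} are functions of $h_{t-1}$ alone and do \emph{not} depend on the current bid profile; write $u_i$ for this cap. Thus round $t$ is a static mechanism: agents report bids, and one unit of divisible demand is allocated greedily to the cheapest reporters subject to $x_i\le u_i$. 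It is convenient to add a virtual ``dummy'' agent with cost $\bar{c}$ and unbounded cap so that the unit demand is always coverable; since $\bar{c}$ upper-bounds every real cost, this dummy is always served last and does not affect the real agents' allocations.

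Second, I would argue that the rule $x^{\pst}(t)$ of Algorithm~\ref{al:sw-greedy} outputs the cost-minimizing (equivalently, reported-welfare maximizing) feasible allocation for this covering problem: any feasible allocation that fails to fill a cheaper reporter before a more expensive one can be strictly improved by a local exchange, so sorting by reported cost and filling caps greedily is optimal. Hence the allocation rule of \TDUCB\ is the VCG allocation rule for the reported instance.

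Third --- and this is the technical heart --- I would verify that the payment $p_i$ of \cref{eq:pymt} is exactly the Clarke pivot payment $p_i = C_{-i} - \sum_{j\ne i} b_j x_j^{\pst}(t)$, where $C_{-i}$ is the minimum cost of covering the unit demand using all reporters except $i$ (and the dummy). When $i$ is not selected ($i>\overline{k_t}$), removing $i$ changes nothing, $C_{-i}=\sum_{j\ne i}b_j x_j^{\pst}(t)$, and both the pivot expression and $p_i$ equal $0$. When $i$ is selected, removing $i$ forces its load $x_i^{\pst}(t)$ to be re-routed to the next-cheapest agents with spare capacity: first $\overline{k_t}$ is topped up to its cap (the term $Z_1$), then the spillover cascades to $\overline{k_t}+1,\overline{k_t}+2,\dots$ (the terms $Z_2$), and whatever still cannot be absorbed is covered by the dummy at price $\bar{c}$ (the residual term $(x_i^{\pst}(t)-\sum_s x_{i,s}^{\ext}(t))\,\bar{c}$). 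Matching this cascade term by term with $x_{i,j}^{\ext}$ as defined in \cref{eq:ext} gives $C_{-i}-\sum_{j\ne i}b_j x_j^{\pst}(t)=Z_3=p_i$. The only care needed here is bookkeeping for the order in which spare capacities are consumed, the degenerate case $i=\overline{k_t}$, and ties in the bid ranking, which a fixed tie-breaking rule resolves.

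Given these three facts, DSIC follows from the usual VCG argument: with $b_{-i}$ and $h_{t-1}$ fixed, the utility of agent $i$ with true cost $c_i$ reporting $b_i$ equals $C_{-i} - \big(\sum_{j\ne i} b_j x_j^{\pst}(t) + c_i x_i^{\pst}(t)\big)$; the first term is independent of $b_i$, while $\sum_{j\ne i} b_j x_j^{\pst}(t) + c_i x_i^{\pst}(t)$ is minimized over feasible allocations exactly when the mechanism optimizes $\sum_{j\ne i} b_j x_j + b_i x_i$, i.e.\ when $b_i=c_i$, so truthful bidding is dominant. (Equivalently one can show directly that $b_i\mapsto x_i^{\pst}(t)$ is non-increasing and that $p_i$ satisfies Myerson's identity $p_i(b_i)=b_i x_i^{\pst}(t)+\int_{b_i}^{\bar{c}} x_i^{\pst}(z,b_{-i};t)\,dz$, and conclude by Myerson's lemma.) For ex-post IR, if $i>\overline{k_t}$ then $x_i^{\pst}(t)=p_i=0$ and $u_i=0$; if $i\le\overline{k_t}$, then $c_i=b_i\le b_{\overline{k_t}}\le b_s$ for every $s\ge\overline{k_t}$ and $c_i\le\bar{c}$, so $Z_3\ge c_i\big(\sum_{s\ge\overline{k_t}} x_{i,s}^{\ext}(t)+(x_i^{\pst}(t)-\sum_{s\ge\overline{k_t}} x_{i,s}^{\ext}(t))\big)=c_i x_i^{\pst}(t)$, hence $u_i=p_i-c_i x_i^{\pst}(t)\ge 0$. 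I expect the third step to be the main obstacle: turning the compact ``absence-of-$i$'' formulas $Z_1,Z_2,Z_3$ into a rigorous identification with the Clarke pivot requires carefully tracking the greedy re-allocation and the edge cases around exhausted capacities and tied bids.
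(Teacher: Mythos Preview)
Your proposal is correct and takes a genuinely different route from the paper. The paper establishes DSIC by an exhaustive case analysis on the position of agent $i$ relative to $\overline{k_t}$ (three top-level cases, each split into overbid/underbid sub-cases), directly comparing the payment and utility expressions in \cref{eq:pymt} under truthful versus misreported bids; IR is declared immediate from the payment formula. You instead recognize the round-$t$ problem (with caps fixed by $h_{t-1}$) as a capacitated fractional covering LP, observe that \SWGREEDY\ computes its optimum, and identify $Z_3$ with the Clarke pivot $C_{-i}-\sum_{j\ne i}b_j x_j^{\pst}(t)$ by matching the greedy re-routing of $x_i^{\pst}(t)$ to the cascade $x_{i,j}^{\ext}$ (with the $\bar c$ term playing the role of your dummy agent). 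DSIC and IR then drop out of the standard VCG/Myerson argument.

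What each buys: your approach is shorter, exposes the mechanism as a Groves scheme, and makes IR a one-line computation; it also generalizes immediately to other cap structures or tie-breaking rules. The paper's case analysis is fully self-contained (no appeal to VCG theory) and makes the comparison of utilities explicit in every scenario, at the cost of length and some repetition across sub-cases. Both arguments are per-round given history, which matches the paper's DSIC definition; the paper's brief ``future rounds'' remark and your conditioning on $h_{t-1}$ handle the dynamic aspect in the same (limited) way. The one place where you should be a bit careful is the bookkeeping step you yourself flag: confirming $\sum_{s\ge\overline{k_t}} x_{i,s}^{\ext}(t)\le x_i^{\pst}(t)$ (so the $\bar c$ residual is nonnegative) and that the $Z_1,Z_2$ recursion exactly reproduces the greedy re-allocation without agent $i$, including the edge case $i=\overline{k_t}$.
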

\begin{proof}
IR is immediate and follows from the definition of the payment scheme of the mechanism ({\cref{eq:pymt}}). \\
We prove the DSIC property by examining different possible scenarios of allocation for an agent.  In each of these scenarios, we compute the utilities with truthful bids as against strategic misreports of bids.  

For performing any job $t$, utility of a worker $i$ is defined as follows.
\begin{equation}
u_i(b_i, b_{-i}; c_i) = p_i(b_i, b_{-i}) - c_i \times x_i^{\pst}(b_i, b_{-i})
\end{equation} 
where $x^{\pst}_i(b_i, b_{-i})$ and $p_i(b_i,b_{-i})$ are the allocation and the payment to the worker $i$ respectively. 
We consider the following three possible scenarios for the positioning of each worker $i$ in the increasing order of ranking of the bids of the workers. We refer to the set of workers with non-zero task allocation as the active set in this proof. 
Throughout the proof, we denote by $A$ the active set of allocated workers when agent $i$ bids his true cost $c_i$. We denote by $A'$ the active set when the agent bids untruthfully.
\begin{itemize}[leftmargin=0mm, labelwidth=-2mm]
\item Case 1: $i > \overline{k_t}$\\
In this scenario, when the agent bids truthfully,
\begin{align*}
\underbrace{b_1 <b_2 < \cdots <   b_{ \overline{k_t}}}_{\text{Bids from} A} < \cdots < b_{i-1} < c_i <b_{i+1} < \cdots < c_N 
\end{align*} 
When the worker reports his cost truthfully (i.e, $b_i = c_i $), he does not receive any allocation and therefore $u_i(c_i, b_{-i}; c_i) = 0$. Now we consider the following two cases when he misreports his cost.
\begin{enumerate}[leftmargin=1mm,label=\alph*),labelwidth=-3mm]
\item \label{overbid-main}
Overbid of cost ($b_i > c_i$) : \\
Since $x_i^{\pst}( c_i, b_{-i}; c_i) = 0$, a higher bid $b_i$ would only place the agent at a position $o_i(b_i, b_{-i}) \geq o_i(c_i, b_{-i}) = i$ in the revised ranking order. At the position $o_i(b_i, b_{-i})$, again the allocation to him would be zero, that is,  $x_i^{\pst}( b_i, b_{-i}; c_i) = 0$ and thereby the utility from overbidding would be same as the utility from truthful bidding. Hence, he does not benefit from overbidding his cost.
\item Underbid of cost ($b_i < c_i$):\\
Here there could be two possibilities:
\begin{enumerate}[label=(\roman*),leftmargin=2mm,  labelwidth=-3mm]
\item $b_i \geq b_{\overline{k_t}}$: This scenario is identical to case 1(a) shown above and hence there is no incentive for the agent to bid in this manner.
\item $b_i < b_{\overline{k_t}}$: 
With such a bid, the agent $i$ is able to enter the active set of allocated workers. \\
Let the position of the agent $i$ in the new active set $A'$ be $j$, that is, 
$o_i(b_i, b_{-i}) = j$,  and the agent with the highest bid in $A'$ is $\overline{k_t}' \leq \overline{k_t}$. Therefore, by underbidding his cost, agent $i$ is able to move the workers $p \in \{\overline{k_t}'+1, \cdots,  \overline{k_t}\}$ out of the active set. We now show that such a bid does not fetch agent $i$ an  increased utility.
As per the payment structure, 
\begin{align}
\label{payment-outofA-overbid}
p_i&(b_i, b_{-i}) = \sum_{s=  \overline{k_t}'}^{\overline{k_t}} x_{i,s}^{\ext}(b_i, b_{-i}) b_s \nonumber\\
& +  \sum_{s = \overline{k_t}+1}^{N} x_{i,s}^{\ext}(b_i, b_{-i}) b_s \nonumber\\& + \left(x_i^{\pst}(b_i, b_{-i})  - \sum_{s=\overline{k_t}' , s \neq i}^N x_{i,s}^{\ext}(b_i, b_{-i})\right) \overline{c} 
\end{align}
The second term in \cref{payment-outofA-overbid} is zero, this is due to the fact that in absence of agent $i$ $\{1,2,\ldots, \overline{k_t}\}$ can complete the current job $t$. Therefore, with even an underbid $i$ has no externality on agents $\{\overline{k_t}+1,\ldots,n\}$.
The third term in \cref{payment-outofA-overbid} is also zero as the allocation with truthful bidding was enough to complete the job $t$ by agents $\{1,2,\ldots, \overline{k_t}\}$. Hence in absence of $i$ the allocation with underbid $x_i^{\pst}(b_i,b_{-i})$ is met by the externality sum. By underbidding, the agent $i$ is therefore able to obtain the portions of the job which would have been allocated to $s \in \{\overline{k_t}'+1 ,\cdots, \overline{k_t}\}$. For all such agents $s$, $x_{i,s}^{\ext}(b_i, b_{-i}) >0$, since in the absence of $i$, these agents would have received an allocation. But note that  $b_s < c_i$ and so these agents contribute towards a negative utility.
%
Therefore the net utility $u_i(b_i, b_{-i}; c_i) < u_i(c_i, b_{-i}; c_i)$.

\end{enumerate}
\end{enumerate}
\item Case 2: $i = \overline{k_t}$.\\
When agent $i$ bids truthfully, the active set is as follows:
\begin{align*}
\underbrace{b_1 \leq \cdots \leq b_{j} \leq \cdots \leq c_i }_{\text{Bids from} A} \leq b_{i+1} \cdots 
\end{align*}
and the payment to agent $i$
\begin{align}
\label{payment-last-agent-truthful}
p_i(c_i,&b_{-i}) = \sum_{s= i+1}^N x_{i,s}^{\ext} (c_i, b_{-i}) b_s\nonumber\\ &+ \left( x_i^{\pst}(c_i, b_{-i}) - \sum_{s=i+1}^n x_{i,s}^{\ext}(c_i, b_{-i})\right)\overline{c}
\end{align}
\begin{enumerate}[leftmargin=1mm, label=\alph*), labelwidth=-2mm]
\item Overbid of cost ($b_i > c_i$): \\
Here we look at two possible values of the range of the bids.
\begin{enumerate}[label=(\roman*), leftmargin=3mm, labelwidth=-4mm]
\item \textit{An overbid such that agent $i$ no longer belongs to the active set $A'$}:  At the position $o_i(b_i, b_{-i})$, the allocation to him is zero, that is,  $x_i^{\pst}( b_i, b_{-i}; c_i) = 0$ and thereby the utility from overbidding would be less than the utility from truthful bidding. Hence, he does not benefit from overbidding his cost in this manner.
\item \textit{An overbid such that agent $i$ remains in the active set but brings in other higher cost agents into the active set}:
Suppose the active set $A'$ contains the agents $\{i+1, \cdots, p \}$ in addition to the set $A$, such that, without loss of generality, 
\begin{align*}
\underbrace{b_1 < \cdots < b_{i-1} < b_{i+1} < \cdots < b_p < b_i}_{\text{Bids from} A'} < b_{p+1} < \cdots 
\end{align*}
The payment to agent $i$ with overbid is,
\begin{align}
\label{payment-overbid-newagents-lastagent}
p_i(b_i,&b_{-i}) = \sum_{s= p+1}^N x_{i,s}^{\ext} (b_i, b_{-i}) b_s\nonumber\\ &+ ( x_i^{\pst}(b_i, b_{-i}) - \sum_{s=p+1}^n x_{i,s}^{\ext}(b_i, b_{-i}))\overline{c}
\end{align}
Since the agents $\{i+1, \cdots, p \}$ have moved before $i$ in the ordering of the bids, those agents do not contribute to $p_i(b_i,b_{-i})$ further. However, for the agents $s \in \{p+1, \cdots, N\}$, $x_{i,s}^{\ext} (b_i, b_{-i}) =x_{i,s}^{\ext} (c_i, b_{-i})$ because the same proportion of job must be reassigned to the agent $s$ when $i$ bids $b_i$ as well as when $i$ is truthful. The first term in \cref{payment-last-agent-truthful} therefore strictly exceeds first term in \cref{payment-overbid-newagents-lastagent}.  
We now show that the second terms in \cref{payment-last-agent-truthful} and \cref{payment-overbid-newagents-lastagent} are equal.
Observe that,
\begin{align*}
x_i^{\pst}(c_i, b_{-i}) &= x_i^{\pst}(b_i, b_{-i}) + \sum_{s=i+1}^p x_{j}^{\pst}(b_i, b_{-i})\nonumber \\
 &= x_i^{\pst}(b_i, b_{-i}) + \sum_{s=i+1}^p x_{i,s}^{\ext}(b_i, b_{-i}) 
\end{align*}
A simple substitution for $x_i^{\pst}(b_i, b_{-i})$ in \cref{payment-overbid-newagents-lastagent} shows that the second terms in 
\cref{payment-last-agent-truthful} and \cref{payment-overbid-newagents-lastagent} are equal.
Therefore the overall payment $p_i(b_i, b_{-i}) < p_i(c_i, b_{-i})$ and further $u_i(b_i,b_{-i}; c_i) < u_i(c_i, b_{-i}; c_i)$.
\end{enumerate}
\item Underbid of cost ( $b_i < c_i$):
Note that in this scenario, there are the following two possibilities.
\begin{enumerate}[label=(\roman*),leftmargin=1mm, labelwidth=-3mm]
\item The active set $A'$ = $A$. The agent $i$ moves to a new position $j$, that is, $o_i(b_i, b_{-i}) = j$. Without loss of generality, we can consider that the agent with the highest bid in $A'$ is now agent $i - 1$.
The ordering of the agents is now,
\begin{align*}
\underbrace{b_1 \leq \cdots \leq b_{j-1} \leq \bold{b_i} \leq b_j \leq b_{i-1}}_{\text{Bids from} A'} \leq b_{i+1} \cdots .
\end{align*}

 By our payment structure,
\begin{align}
\label{payment-underbid-last-item}
p_i(b_i,&b_{-i}) = \sum_{\substack{ s= i-1 \\ s \neq i}}^N x_{i,s}^{\ext} (b_i, b_{-i}) b_s\nonumber\\ &+ ( x_i^{\pst}(b_i, b_{-i}) - \sum_{\substack{ s= i-1 \\ s \neq i}}^N x_{i,s}^{\ext}(b_i, b_{-i}))\overline{c}
\end{align}
Since the active set remains the same in spite of underbidding, $\forall s, \; i+1 \leq s \leq N$, $x_{i,s}^{\ext}(b_i, b_{-i}) = x_{i,s}^{\ext} (c_i, b_{-i})$ and in addition, $x_{i, i-1}^{\ext}(b_i, b_{-i}) > 0$, but, $b_{i-1} < c_i$.  Therefore, the first term in \cref{payment-last-agent-truthful} exceeds the first term in \cref{payment-underbid-last-item}. 
We also know that, 
\begin{equation}
x_i^{\pst}(b_i, b_{-i}) = x_i^{\pst}(c_i, b_{-i}) + x_{i, i-1}^{\ext}(b_i, b_{-i}) 
\end{equation}
 since the additional allocation that $i$ gets due to an overbid would be allocated to the last agent $i-1$ in $A'$, in the absence of $i$. A simple substitution in \cref{payment-underbid-last-item} shows that the second terms in \cref{payment-last-agent-truthful} and  \cref{payment-underbid-last-item} are equal. Therefore $u_i(b_i, b_{-i}; c_i) < u_i(c_i, c_{-i}; c_i)$.
\item The active set $A'$ due to underbidding by agent $i$ is smaller than the active set $A$ due to truthful bidding by agent $i$: This means that some agents get removed from $A$.
Suppose the agents  $ s \in \{j+1, \cdots, i-1\}$ get pushed out in the active set $A'$. Then by a similar argument as in the case 2 (b) (i) above, $x_{i,s}^ {\ext}(b_i, b_{-i}) > 0 $, but $b_s < c_i$. Therefore these agents contribute towards a negative utility and hence, $u_i(b_i, b_{-i}; c_i) < u_i(c_i, c_{-i}; c_i)$.
\end{enumerate}
\end{enumerate}
\item Case 3: $i < \overline{k_t}$
\begin{enumerate}[label=\alph*), leftmargin=1mm, labelwidth=-3mm]
\item Overbid of cost ($b_i > c_i$):\\
If the agent $i$ bids a higher cost, the position of $i$ in the ranking order changes to one of the following.
\begin{enumerate}[label=(\roman*), leftmargin=1mm, labelwidth=-3mm]
\item $i \leq o_i(b_i, b_{-i}) < \overline{k_t}$:
 The allocation to the worker remains the same as when he is truthful, that is, $x_i^{\pst} (b_i, b_{-i}) $ $= x_i^{\pst} (c_i, b_{-i})$. Our payment structure ensures that the payment $p_i(b_i, b_{-i
}) = p_i(c_i, b_{-i})$ and hence $u_i(b_i, b_{-i}; c_i) = u_i(c_i, b_{-i}; c_i)$.
\item $o_i(b_i, b_{-i}) = \overline{k_t}$:
In this case, agent $i$ ends up losing a part of $x_i^{\pst}(c_i, b_{-i})$ to the worker $k_t$. This scenario is analogous to Case 2 (a) (ii) where a worker who bids truthfully would have been at the last position $k_t$, but by overbidding ends up sharing his allocation with other agents. Therefore $u_i(b_i, b_{-i}; c_i) < u_i(c_i, b_{-i}; c_i)$.

\item $ o_i(b_i, b_{-i}) > \overline{k_t}$: Here, agent $i$ does not receive any allocation and thereby his payment as well as utility are both zero.
\end{enumerate}
\item Underbid of cost ($b_i < c_i$):\\
 Upon bidding a lower cost, the agent moves further up in the ranking order, that is $o_i(b_i, b_{-i}) \leq i$. The allocation also does not change, that is, $x_i^{\pst} (b_i, b_{-i}) = x_i^{\pst} (c_i, b_{-i})$. Our payment structure ensures that the payment $p_i(b_i, b_{-i
}) = p_i(c_i, b_{-i})$ and hence $u_i(b_i, b_{-i}; c_i) = u_i(c_i, b_{-i}; c_i)$.
\end{enumerate}
\end{itemize}
\emph{Future rounds}: If the agent $i$ ignores the loss incurred in the current job $t$ and chooses to manipulate the current bid for future utility, the resulting argument rolls back to one the above three cases.\hfill \qed
\end{proof}

\section{Regret Analysis}
\label{sec:regret-analysis}
\noindent In the strategic as well as the non-strategic settings, the underlying optimization problem involves parameters that are learnt in tandem. Hence regret is an important notion which we analyse in this section. Following are some relevant definitions. A problem instance in this space is characterized by a set of crowd agents $N$, the vector $c$ of their costs, the mean vectors ($\rho, \beta$), and the design parameters -- Deadline($D$), accuracy ($\varepsilon$). 

\begin{definition}{Optimal worker set:} For a problem instance with all the parameters known, in the solution to the optimization problem of \cref{opt_problem_known}, we refer to the set of agents allocated non-zero fraction of the job as the the optimal worker set.
\end{definition}

\begin{definition}{Optimal allocation:} We refer to the solution of \cref{opt_problem_known} as the optimal allocation.
\end{definition}

\begin{definition}{$\Delta$-Separation:} Let $k^*$ be the agent in the optimal worker set with the highest bid. In the optimal allocation (social welfare maximizing), all workers' allocation except $k^*$ would meet the constraints in \cref{opt_problem_known} with equality. We refer to the $\Delta$-separation as the additional fraction of the job which agent $k^*$ can take without violating any of the constraints. As all the stochastic parameters in this space are continuous, almost surely $\Delta >0$.
\end{definition}

\begin{definition}{Regret:} A learning mechanism in this space suffers a loss in social welfare due to either a) non-optimal set selection or b) due to suboptimal allocation within the optimal set. Formally, regret of a mechanism $\mathcal{A}$, is given by
\[R(\mathcal{A}) = T \sum_{i=1}^n c_ix_i^* - \sum_{t=1}^T \sum_{i=1}^n c_ix_i^{(\mathcal{A})}(t), \]
where $x_i^{(\mathcal{A})}(t)$ is the allocation to the agent $i$ for the job $t$ by the mechanism $\mathcal{A}$.
\end{definition}

\noindent We use the truncated empirical estimator within our Robust UCB scheme. Through an invocation of the Bernstein inequality, we have, with high confidence (probability $> 1-t^{-4}$ for the $t^{th}$ job), the true mean lies within the Robust UCB and LCB indices(see Lemma 1 in \cite{DBLP:BUBECK12HEAVYTAIL}). With enough samples, the symmetric indices of the Robust UCB scheme shrinks small enough so that no additional agents than the optimal set are required to meet the spill-over even due to the pessimistic strategy used.

\begin{theorem}
The \TDUCB\ mechanism selects an optimal set after the job $t' \in O(\log T)$.
\label{thm:optpull}
\end{theorem}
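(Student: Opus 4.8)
Since the allocation rule of \TDUCB\ is exactly that of \SWGREEDY\ run on the reported bid vector, and since truthful reporting is a dominant strategy (\Cref{thm:ic-ir}), it suffices to bound, for \SWGREEDY\ run on the true costs, the number of jobs whose selected active set differs from the optimal worker set; the same $t'$ then carries over to the strategic setting. By the greedy cost ordering the optimal worker set is a prefix $\{1,\dots,k^*\}$, so the plan is to show that the number of jobs whose active set is not exactly $\{1,\dots,k^*\}$ is $O(\log T)$. The argument follows the usual UCB template: first isolate the estimation event that is necessary for selecting a wrong set, then convert it into a lower bound on the number of ``bad'' samples at some optimal worker, and finally observe that optimal workers are sampled on essentially every job, so that event can occur only $O(\log T)$ times.

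\textbf{Step 1: a necessary condition for a wrong selection.} Write $\bar x_i=\frac{1}{\rho_i}\min(D,\beta_i\ln\frac{1}{1-\varepsilon})$ for the true per-worker cap and $\hat{\bar x}_i(t)=\frac{1}{\hat\rho_i^{+}}\min(D,\hat\beta_i^{-}\ln\frac{1}{1-\varepsilon})$ for the pessimistic cap used at job $t$. On the good event $G_t$ that $\hat\rho_i^{+}\ge\rho_i$ and $\hat\beta_i^{-}\le\mathbb{E}[\beta_{\delta,i}^{'}]$ for every $i$ — which, by Lemma~1 of \cite{DBLP:BUBECK12HEAVYTAIL} and a union bound over the $n$ workers, holds with probability at least $1-O(t^{-4})$ — each pessimistic cap is at most $\bar x_i$ up to the $O(\delta)$ bias of the surrogate $\beta_{\delta,i}^{'}$ (\Cref{lemma_main}). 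I would fix $\delta$ small enough relative to the instance (below both $\Delta$ and worker $k^*$'s true allocation $\bar x_{k^*}-\Delta$) that this cumulative bias never lets $\{1,\dots,k^*-1\}$ cover the job; then on $G_t$ the active set is again a prefix $\{1,\dots,\bar k_t\}$ with $\bar k_t\ge k^*$, and a wrong set is selected only if $G_t$ fails or $\sum_{i=1}^{k^*}(\bar x_i-\hat{\bar x}_i(t))>\Delta$, i.e. only if some optimal worker has $\bar x_i-\hat{\bar x}_i(t)>\Delta/k^*$. By uniform continuity of $(r,b)\mapsto\frac{1}{r}\min(D,b\ln\frac{1}{1-\varepsilon})$ on the compact box $[\barbelow{\rho},\bar\rho]\times[\barbelow{\beta},\bar\beta]$, this forces the Robust-UCB half-width $\max(\hat\rho_i^{+}-\hat\rho_i,\ \hat\beta_i-\hat\beta_i^{-})$ of that worker to exceed a constant $\gamma=\Theta(\Delta/k^*)$ of the instance.

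\textbf{Steps 2--3: sample counting.} For the truncated-mean Robust UCB estimator the half-width after $m$ samples at job $t$ is $O(\sqrt{(v\log t)/m})$ up to a lower-order $O((\log t)/m)$ truncation term, with $v$ a moment bound for the (sub-exponential) log-normal JCT, resp. for the surrogate $\delta\cdot N_\delta^{(i)}$; hence worker $i$'s half-width exceeds $\gamma$ only while $N_{i,t}=O((\log t)/\gamma^2)$ and $N_{i,t}^\beta=O((\log t)/\gamma^2)$. But on $G_t$ every optimal worker belongs to $\{1,\dots,\bar k_t\}$ and therefore receives a JCT sample on job $t$, so $N_{i,t}\ge t-B_t$ with $B_t$ the number of jobs $\le t$ on which $G$ failed; and each job contributes a $\delta$-failure of worker $i$ independently with probability $1-e^{-\delta/\beta_i}>0$, so a Chernoff bound gives $N_{i,t}^\beta\ge\tfrac12 t(1-e^{-\delta/\beta_i})$ outside an event of probability $e^{-\Omega(t)}$. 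Feeding these lower bounds back, there is $t'=O((k^*/\Delta)^2\log T)=O(\log T)$ such that for every $t\ge t'$ no optimal worker can have half-width larger than $\gamma$ on $G_t$, so the wrong-selection event of Step~1 cannot occur; and the jobs on which $G_t$ or the $N_{i,t}^\beta$-concentration fails number $O(1)$ in expectation since $\sum_t t^{-4}<\infty$ and $\sum_t e^{-\Omega(t)}<\infty$. Taking expectations yields the claimed $O(\log T)$ bound, uniformly over the non-strategic and strategic settings.

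\textbf{Main obstacle.} The delicate step is Step~1: turning the qualitative $\Delta$-separation into a quantitative joint margin $\gamma$ on the confidence widths of \emph{both} $\rho_i$ and $\beta_i$, while simultaneously taming the surrogate for $\beta_i$, which carries an $O(\delta)$ bias (\Cref{lemma_main}) and whose samples arrive at a slow, random rate $1-e^{-\delta/\beta_i}$ rather than one per job; in particular $\delta$ must be chosen small enough that neither the bias nor the slow accrual of $\beta$-samples spoils the $O(\log T)$ count. Once $\delta$ is pinned and the Bernstein-type confidence widths of the Robust UCB scheme are in hand, the remainder is routine: the self-consistent solution of $t'\gtrsim\log t'$ together with summing the $O(t^{-4})$ and $e^{-\Omega(t)}$ failure probabilities.
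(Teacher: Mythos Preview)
Your argument is correct and follows the same template as the paper's: reduce a wrong selection to the event $x_i^*-x_i^{\pst}(t)>\Delta/k^*$ for some optimal worker $i$, translate that into a lower bound on that worker's Robust-UCB confidence width, and then use the fact that every optimal worker is in the active set (so is sampled on every job) to cap the number of such jobs at $O((\log T)/\Delta^2)$.

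The paper differs only in execution. In place of your uniform-continuity step it splits explicitly on whether $\min\bigl(D,\hat\beta_i^{-}\ln\frac{1}{1-\varepsilon}\bigr)$ equals $D$ or the $\beta$-term, obtaining the concrete inequalities $x_i^*-x_i^{\pst}(t)\le 2D\rho_{\ri}/\barbelow{\rho}^2$ and $x_i^*-x_i^{\pst}(t)\le \beta_{\ri}\,(2\bar\rho+2\bar\beta)\ln\!\frac{1}{1-\varepsilon}/\barbelow{\rho}^2$, and hence instance-dependent thresholds such as $t\ge 336\,u_\rho n^2 D^2\log T/(\Delta^2\barbelow{\rho}^4)$. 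Conversely, the paper is looser precisely where you are careful: it does not isolate a good event $G_t$ or sum the $O(t^{-4})$ failure probabilities, and it replaces your Chernoff bound for $N_{i,t}^\beta$ by the simplifying assumption that a $\beta$-sample arrives on every job, declaring the discrepancy a constant-factor issue; it also does not discuss the $O(\delta)$ surrogate bias you flag. Net effect: the paper's case split buys explicit constants, while your treatment of the bad-event mass and the $\beta$-surrogate is the more rigorous of the two.
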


\begin{proof}
We denote $k^*$ as the costliest agent in the optimal set. Let $x^* = \{x_1^*,\ldots,x_{k^*}^*\}$ denote the allocations when the means are known.  Consider $\Delta$,
$$\Delta = \frac{\min(D,\beta_{k^*} \log(\frac{1}{1-\varepsilon}))}{\rho_{k^*}} - x_{k^*}(t,\beta)$$
$\Delta$ denotes the additional fraction of work the agent $k^*$ can take up without violating the constraints.  Following is a sufficient condition on $t$ when the set selected by the pessimistic estimate matches the optimal set. 
\begin{equation}
\begin{array}{|c|}
\hline\vspace{-1ex}\\
\vspace{-2ex}
\text{Need to get :}x^*_i(\rho,\beta) - x_i^{\pst}(t) \leq \frac{\Delta}{n},\; \forall i \leq k^*\\ \\
\hline
\end{array}
\label{eq:suffcond}
\end{equation}
We denote $\rho_{\ri}(t)$ and $\beta_{\ri}(t)$ as the Robust UCB indices of the JCT and the TTF of a worker in the active set. Recall, the active set for job $t$ denotes an agent allocated a non-zero fraction of the job. The expression for the pessimistic allocation $  \forall i \in \{1,\ldots,k^*\}$ at job $t$ is given by
\begin{align}
x_i^{\pst}(t) \times (\hat{\rho}_i + \rho_{\ri}) = \min\left(D , (\hat{\beta}_i- \beta_{\ri}) \log\left(\frac{1}{1-\varepsilon}\right)\right)
\label{eq:pess}
\end{align}
The allocation $x_i^{\pst}$ is determined by equality in \cref{eq:pess} whenever the set chosen is not optimal. We analyse this allocation via  two cases to determine the job $t$ when condition in \cref{eq:suffcond} is met.\\
\noindent Case (i): $x_i^{\pst}(t)\times(\hat{\rho}_i + \rho_{\ri}) = D$ or $ x_i^{\pst}(t)  = \frac{D}{\hat{\rho}_i + \rho_{\ri}} $ Consider,
\begin{align*}
&x^*_i(\rho,\beta) - x_i^{\pst}(t) \\
 &= \frac{\min\left(D, \beta_i \log \left(\frac{1}{1-\varepsilon}\right) \right)}{\rho_i} - \frac{D}{\hat{\rho}_i + \rho_{\ri}} \\
&\leq \frac{D}{\rho_i} - \frac{D}{\hat{\rho}_i + \rho_{\ri}}\\
&\leq \frac{D}{\hat{\rho}_i - \rho_{\ri}} - \frac{D}{\hat{\rho}_i + \rho_{\ri}}  [\because  \text{w.h.p. $\rho_i \geq \hat{\rho}_i - \rho_{\ri}$}] \\
& \leq \frac{2D \rho_{\ri}}{\barbelow{\rho}^2}
\end{align*}
In the current case, we have that the sufficiency condition is met whenever $\rho_{\ri} \leq \frac{\Delta \barbelow{\rho}^2}{2nD}$. In terms of the job $t$, through the expression of robust UCB index $\rho_{\ri}$, the case is met whenever,
\begin{equation}
t \geq \frac{336 u_\rho n^2 D^2 \log(T)}{\Delta^2 \barbelow{\rho}^4}
\end{equation}
where $u_\rho$ is an upper bound on the second moment of completion time (can be shown via easy computation).\\
\noindent Case (ii): $x_i^{\pst}(t)\times(\hat{\rho}_i + \rho_{\ri}) = (\hat{\beta}_i - \beta_{\ri}) \log \left(\frac{1}{1-\varepsilon}\right)$ or $ x_i^{\pst}(t)  = \frac{\hat{\beta}_i - \beta_{\ri}}{\hat{\rho}_i + \rho_{\ri}}\log \left(\frac{1}{1-\varepsilon}\right) $. Unlike $\rho_{i}$ where samples are obtained for every $t$, the samples from surrogate are obtained after multiple (yet finite due to bounded $\beta$) jobs.  To simplify our analysis, we consider as if a sample of $\beta_i$ is obtained for every job, the difference due to this simplification is only within constant factors. Consider,
\allowdisplaybreaks
\begin{align*}
& x^*_i(\rho,\beta) - x_i^{\pst}(t) \\
 &= \frac{\min\left(D, \beta_i \log \left(\frac{1}{1-\varepsilon}\right) \right)}{\rho_i} 
- \frac{\hat{\beta}_i - \beta_{\ri}}{\hat{\rho}_i + \rho_{\ri}}\log \left(\frac{1}{1-\varepsilon}\right) \\
&\leq \left(\frac{\beta_i}{\rho_i} - \frac{\hat{\beta}_i - \beta_{\ri}}{\hat{\rho}_i + \rho_{\ri}}\right)\log \left(\frac{1}{1-\varepsilon}\right)\\
&\leq \left(\frac{\hat{\beta}_i + \beta_{\ri}}{\hat{\rho}_i - \rho_{\ri}} - \frac{\hat{\beta}_i - \beta_{\ri}}{\hat{\rho}_i + \rho_{\ri}}\right)\log \left(\frac{1}{1-\varepsilon}\right) \\
&\intertext{W.h.p $\rho_i \geq \hat{\rho}_i - \rho_{\ri}$ and $\beta_i \leq \hat{\beta}_i + \beta_{\ri} $. As the surrogate observes at most one sample for every sample of $\rho_i$, we have $\beta_{\ri} \geq \rho_{\ri}$}\\
&\leq \beta_{\ri} \times \frac{2\bar{\rho}+2\bar{\beta}}{\barbelow{\rho}^2} \log \left( \frac{1}{1-\varepsilon} \right)
\end{align*}
This gives us that the sufficiency condition is met whenever $\beta_{\ri} \leq \frac{\Delta \barbelow{\rho}^2}{n(2\bar{\rho}+2\bar{\beta}) \log \left( \frac{1}{1-\varepsilon} \right) }$. In terms of the round $t$, 
the sufficiency condition is met whenever 
\begin{equation}
t \geq \frac{64 n^2 u_\beta \log(T) (2\bar{\rho}+2\bar{\beta})^2 \log^2\left( \frac{1}{1-\varepsilon}\right)}{\Delta^2 \barbelow{\rho}^4}
\label{eq:beta}
\end{equation}
where $u_\beta $ is an upper bound on the second moment on the TTF. From \cref{eq:pess} and \cref{eq:beta}, the optimal set is chosen after $O(\log(T))$ online jobs.
\end{proof}

\noindent As mentioned earlier, the regret in this setting arises first out of sub-optimal set selection and thereon out of sub-optimal allocation. Through \cref{thm:optpull}, we bound the number of jobs where sub-optimal set is chosen. The following theorem establishes the asymptotic efficiency of our learning scheme.

\begin{theorem}
Average regret of \TDUCB\ mechanism approaches zero asymptotically. 
\label{thm:asym}
\end{theorem}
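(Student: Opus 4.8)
The plan is to split the cumulative regret of the \TDUCB\ mechanism into the contribution of the jobs handled before a provably optimal worker set is locked in and the contribution of all later jobs, and to show each piece is $o(T)$. Write $t' \in O(\log T)$ for the job index guaranteed by \Cref{thm:optpull}, so that for every $t > t'$ the active set chosen by \TDUCB\ coincides with the optimal set $\{1,\dots,k^*\}$. For a job $t$ let $\ell_t = \sum_{i=1}^n c_i x_i^{\pst}(t) - \sum_{i=1}^n c_i x_i^*$ be the per-job loss in social welfare; since $x^{\pst}(t)$ is a feasible allocation for the true means with high probability (pessimism in $\hat{\rho}_i^+$ and $\hat{\beta}_i^-$ forces each per-worker constraint to hold for the true $\rho_i,\beta_i$), $\ell_t \ge 0$ on that event and the cumulative regret equals $\sum_{t=1}^T \ell_t$. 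Because $\sum_i x_i^{\pst}(t) = \sum_i x_i^* = 1$ and all costs lie in $[\barbelow{c},\bar{c}]$, we always have $\ell_t \le \bar{c} - \barbelow{c}$, so the first phase contributes at most $(\bar{c} - \barbelow{c})\,t' = O(\log T)$.

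For the second phase ($t > t'$) the optimal set is fixed, so for $i < k^*$ the greedy clamp never binds and $x_i^{\pst}(t)$ equals worker $i$'s pessimistic capacity, which is at most its true capacity $x_i^*$; worker $k^*$ absorbs the shortfall, $x_{k^*}^{\pst}(t) - x_{k^*}^* = \sum_{i<k^*}\big(x_i^* - x_i^{\pst}(t)\big) =: M_t$. Since $c_i \le c_{k^*} \le \bar{c}$ for $i \le k^*$, this gives $\ell_t \le (\bar{c} - \barbelow{c})\, M_t \le (\bar{c} - \barbelow{c})\, n\max_{i \le k^*}\big(x_i^*(\rho,\beta) - x_i^{\pst}(t)\big)$. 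The per-agent gap $x_i^*(\rho,\beta) - x_i^{\pst}(t)$ was already bounded inside the proof of \Cref{thm:optpull}: at most $\frac{2D\,\rho_{\ri}(t)}{\barbelow{\rho}^2}$ in Case (i) and at most $\beta_{\ri}(t)\,\frac{2\bar{\rho} + 2\bar{\beta}}{\barbelow{\rho}^2}\log\frac{1}{1-\varepsilon}$ in Case (ii), where $\rho_{\ri}(t)$ and $\beta_{\ri}(t)$ are the Robust UCB half-widths. After job $t'$ every worker in the optimal set receives a positive fraction on every job, so its $\rho$-sample count satisfies $N_{i,t}\ge t-t'$, and the surrogate variable for $\beta_i$ also yields a fresh sample every bounded number of jobs (bounded because $\beta_i \le \bar{\beta}$), so its sample count grows linearly in $t-t'$ too. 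Substituting these counts into the Robust UCB index, which scales like $\sqrt{u\,\log t/N_{i,t}}$ with the $\log t$ coming from the $t^{-4}$ confidence level used in the excerpt, gives $\rho_{\ri}(t),\beta_{\ri}(t) = O\big(\sqrt{\log t/(t-t')}\big)$, hence $\ell_t = O\big(\sqrt{\log t/(t-t')}\big)$ on the high-probability event.

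Summing the second phase, $\sum_{t=t'+1}^T \ell_t = O\big(\sqrt{\log T}\big)\sum_{s=1}^{T} s^{-1/2} = O\big(\sqrt{T\log T}\big)$. The low-probability event on which some confidence bound fails contributes at most $(\bar{c}-\barbelow{c})\sum_{t\ge1} t^{-4} = O(1)$ to the expected regret, using the per-job failure probability at most $t^{-4}$ from Lemma 1 of \cite{DBLP:BUBECK12HEAVYTAIL} together with the uniform bound $\ell_t \le \bar{c} - \barbelow{c}$. Combining the three pieces, the expected cumulative regret is $O(\log T) + O(\sqrt{T\log T}) + O(1) = O(\sqrt{T\log T})$, and therefore the average regret is $O\big(\sqrt{\log T / T}\big) \to 0$.

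The step I expect to be the real obstacle is the middle one: turning the allocation-gap inequalities of \Cref{thm:optpull} into a genuinely vanishing per-job bound requires showing that, once the optimal set has stabilised, each of its workers is fed samples at rate $\Theta(1)$ per job --- immediate for $\rho_i$, but for the surrogate $\beta_{\delta,i}'$ one must argue that a failure (hence a new sample) occurs within a bounded number of jobs because $\beta_i$ is bounded --- and then checking that neither the $\delta$-discretisation bias of \Cref{lemma_main} nor the ``at most one surrogate sample per $\rho$-sample'' slack degrades the $O(\sqrt{\log t/t})$ rate by more than constant factors. The remaining ingredients --- the two-phase split, the trivial per-job bound, and the $\sum_s s^{-1/2} = O(\sqrt T)$ estimate --- are routine bookkeeping.
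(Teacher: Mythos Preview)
Your proposal is correct and follows the paper's approach: invoke \Cref{thm:optpull} to reduce to the phase in which the optimal set is already selected, then reuse the Case~(i)/(ii) allocation-gap bounds from that proof together with the shrinking Robust UCB half-widths to control the per-job loss. The paper's own proof is considerably terser---it simply bounds $c^{k^*}(x_i^* - x_i^{\pst}(t))/t$ by the same two cases and observes that $\rho_{\ri}/t$ and $\beta_{\ri}/t$ tend to zero---whereas you make the two-phase split explicit, track the linear growth of sample counts (including the $\beta$-surrogate), handle the low-probability failure events, and actually sum the per-job losses to extract the quantitative rate $O(\sqrt{\log T/T})$, which is strictly more than the paper proves.
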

\begin{prf}
WLOG due to \Cref{thm:optpull},  we will consider the case where the active set is the optimal set. Let $k^*$ be the last member in the active set. The average regret, for the job $t$, is then given by 
\[ R_{avg,t} = \sum_{i=1}^n \frac{c^{k^*}\left(x^*_i - x^{\pst}_i(t)\right)}{t}\]
Through steps similar to the proof of \cref{thm:optpull}, we have for a job $t$ for an agent $i$ either
\[ \frac{c^{k^*}\left(x^*_i - x^{\pst}_i(t)\right)}{t} \leq \frac{2D \rho_{\ri}}{t \times \barbelow{\rho}^2} \]
\[ \text{or } \frac{c^{k^*}\left(x^*_i - x^{\pst}_i(t)\right)}{t} \leq \frac{\beta_{\ri}}{t} \times \frac{2\bar{\rho}+2\bar{\beta}}{\barbelow{\rho}^2} \log \left( \frac{1}{1-\varepsilon} \right). \]
As both $\beta_{\ri} /t$ and $\rho_{\ri}/t$  approach zero, we have
\begin{align*}
\lim_{t\rightarrow \infty}R_{avg,t} = 0.  \qed
\end{align*}
\end{prf}

\section{Simulations}
\label{sec:simulations}
We have shown theoretical guarantees on the regret in the asymptotic sense. In practice, the constants pertaining to the same are unknown. Also, our algorithm focusses on the  optimization of social welfare. However, the payments to the workers involve externality to the workers which depend on the Robust UCB learning scheme. The learning scheme does not provide guarantees on the stochastic parameters for the sub-optimal workers. But the payments to the optimal worker set may also involve these parameters of the sub-optimal workers. Hence the simulations help us to form a fair idea about the same. 
Therefore, in order to investigate the efficacy of our algorithm in practical terms, we tested our method on  synthetically generated datasets. 

We simulated a set of 400 diverse workers with different costs, completion times and error rates.
We fixed the following values, $\underline{\rho} = 50$, $\overline{\rho} = 100$, $\underline{\beta} = 25$, $\overline{\beta} =  35$, $\underline{c} = 10$ and $\overline{c} = 100$. Out of the 400 workers, 250 high performing workers were simulated with $\rho$ values uniformly sampled from $[50, 75]$, $\beta$ values uniformly from $[30,35]$ and costs from $[10,50]$. A set of 150 mediocre workers were also simulated. These workers were simulated with $\rho$ values set to $100$, $\beta$ to  $25$, and costs to $100$. The deadline $D$ was fixed as $50$ and the error probability threshold ($\varepsilon$) for an agent allocated a task was set to $0.01$.

We checked the performance of our algorithm for a total of $10^6$ jobs. The baseline used for comparison is the omniscient greedy allocation scheme with the parameters $\beta$ and $\rho$ known. We refer to this baseline as `Optimal'. In the context of payments, the `Optimal' baseline refers to the minimum payment for inducing truthful reports for the greedy allocation aware of the means.

With increasing number of jobs, we observed that the total payment reduced and the negative social welfare decreased. The plots for the same are shown in \cref{fig1}(a) and \cref{fig1}(b).   The performance of \TDUCB\  improves towards `Optimal' when executed over more jobs. This shows that the learning improves over time and converges to the optimal set / optimal allocation.
\begin{figure}[!htb]
 \centering
    \begin{minipage}{.25\textwidth}
        \centering
       \includegraphics[scale=0.2]{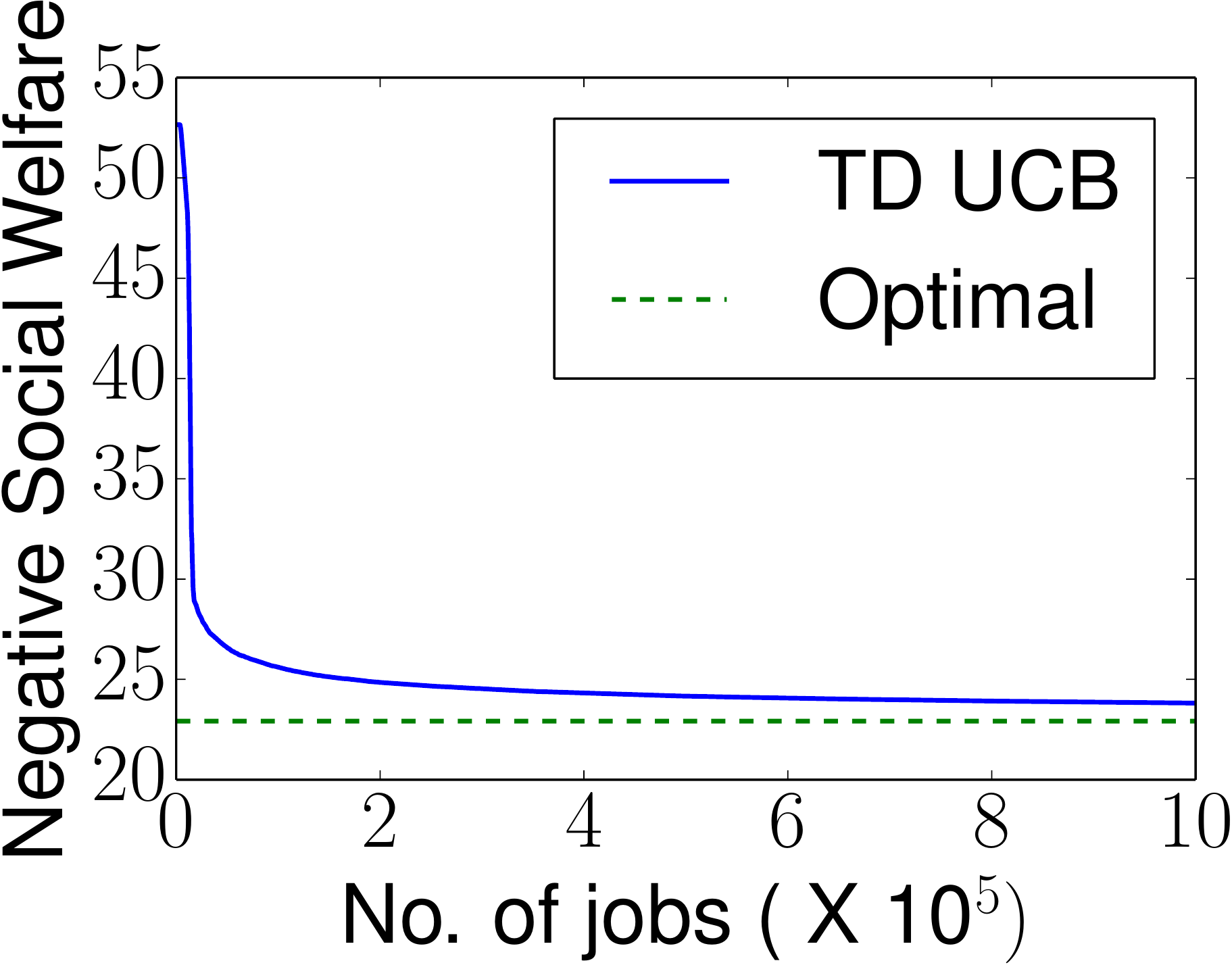}\\(a)
    \end{minipage}%
    \begin{minipage}{0.25\textwidth}
        \centering
        \includegraphics[scale=0.2]{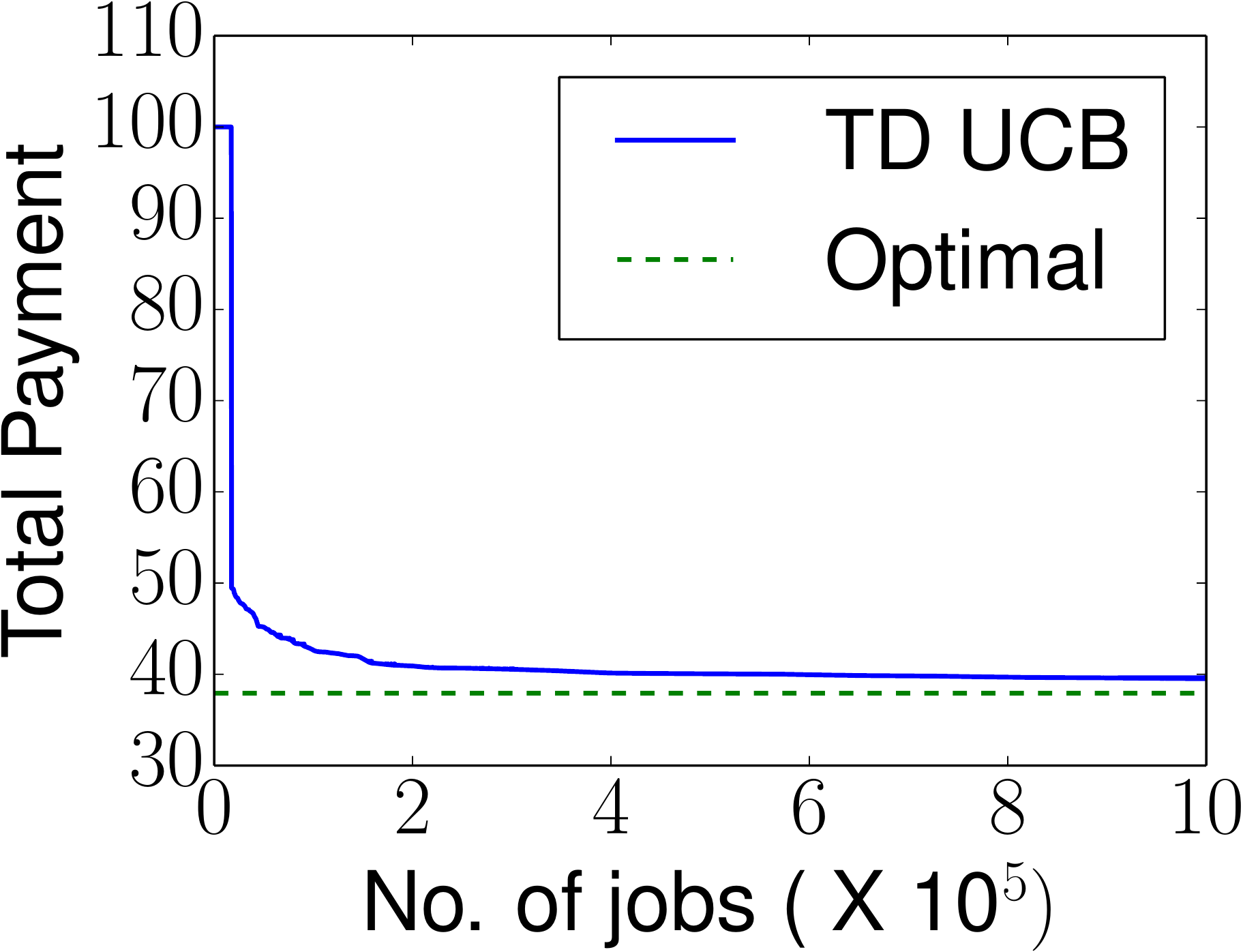}\\(b)
 \end{minipage}
     \caption{(a) Negative Social Welfare of the workers, (b) Total Payment to the workers. No. of workers = 400, $\varepsilon$ = 0.01}
     \label{fig1}
\end{figure}
\section{Future Work}
\noindent Following are some directions for future work. 
\begin{enumerate}[label=\alph*),leftmargin=1mm]
\item The derivation of theoretical bounds on the payments to the workers is still an open question. It would be interesting to see how these learned values affect the payment.
\item Investigate the use of other models for the JCT and TTF. For instance, the error committed by a worker could be modelled as a Bernoulli random variable the bias of which could depend on the time to completion. Such models would pose more challenges for the learning due to the interdependency between the stochastic parameters.
\item In our formulation of the problem in \cref{opt_problem_known}, we have posed the constraint on meeting the deadline as a requirement to be met in expectation. We would also be interested in satisfying the constraints in a probabilistic sense.
\end{enumerate}
\newpage
\bibliographystyle{abbrvnat}
\bibliography{crowd}
\end{document}